\title{Cortex Neural Network: learning with Neural Network groups}
\author{
Liyao Gao\\
Department of Computer Science\\
Purdue University\\
\texttt{gao463@purdue.edu} \\
}
\begin{document}

\maketitle

\begin{abstract}
  Neural Network has been successfully applied to many real-world problems, such as image recognition and machine translation. However, for the current architecture of neural networks, it is hard to perform complex cognitive tasks, for example, to process the image and audio inputs together. Cortex, as an important architecture in the brain, is important for animals to perform the complex cognitive task. We view the architecture of Cortex in the brain as a missing part in the design of the current artificial neural network.

  In this paper, we purpose Cortex Neural Network (CrtxNN). The Cortex Neural Network is an upper architecture of neural networks which motivated from cerebral cortex in the brain to handle different tasks in the same learning system. It is able to identify different tasks and solve them with different methods. In our implementation, the Cortex Neural Network is able to process different cognitive tasks and perform reflection to get a higher accuracy. We provide a series of experiments to examine the capability of the cortex architecture on traditional neural networks. Our experiments proved its ability on the Cortex Neural Network can reach accuracy by 98.32\% on MNIST and 62\% on CIFAR10 at the same time, which can promisingly reduce the loss by 40\%. 
  

\end{abstract}

\section{Introduction}
Researchers have been focused a lot on neural networks. In recent years, a series of neural networks have been introduced. Many of the neural networks can reach a satisfying performance in simple cognitive tasks [1].
Examples would be the convolutional neural network in the field of image object recognition, recurrent neural network in speech recognition, and LSTM in machine translation. However, currently, it is still difficult for a neural network to solve complex cognitive tasks [2, 10]. A complex cognitive task describes a more difficult task which requires cognitive process compared to a simple cognitive task. In this paper, it includes multi-cognitive task processing and learning with reflection. 

Firstly, multi-cognitive task processing is nearly impossible. Using a single deep neural network to process image, audio and video together might lead a miserable performance and require a large amount of training data. In the current research, the Multitask learning mainly focused on an approach to inductive transfer that improves learning for one task by using the information contained in the training signals of other related tasks [13], like transfer learning. Also, in a learning process, based on the current architecture, we miss the part of reflection. Human will perform a reflection when approaching its limit in performance [11]. 
The current neural network architecture cannot perform reflection. For solving the complex cognition tasks, a key is missing here. 

\begin{figure}
\centering
\includegraphics[width=9.3cm]{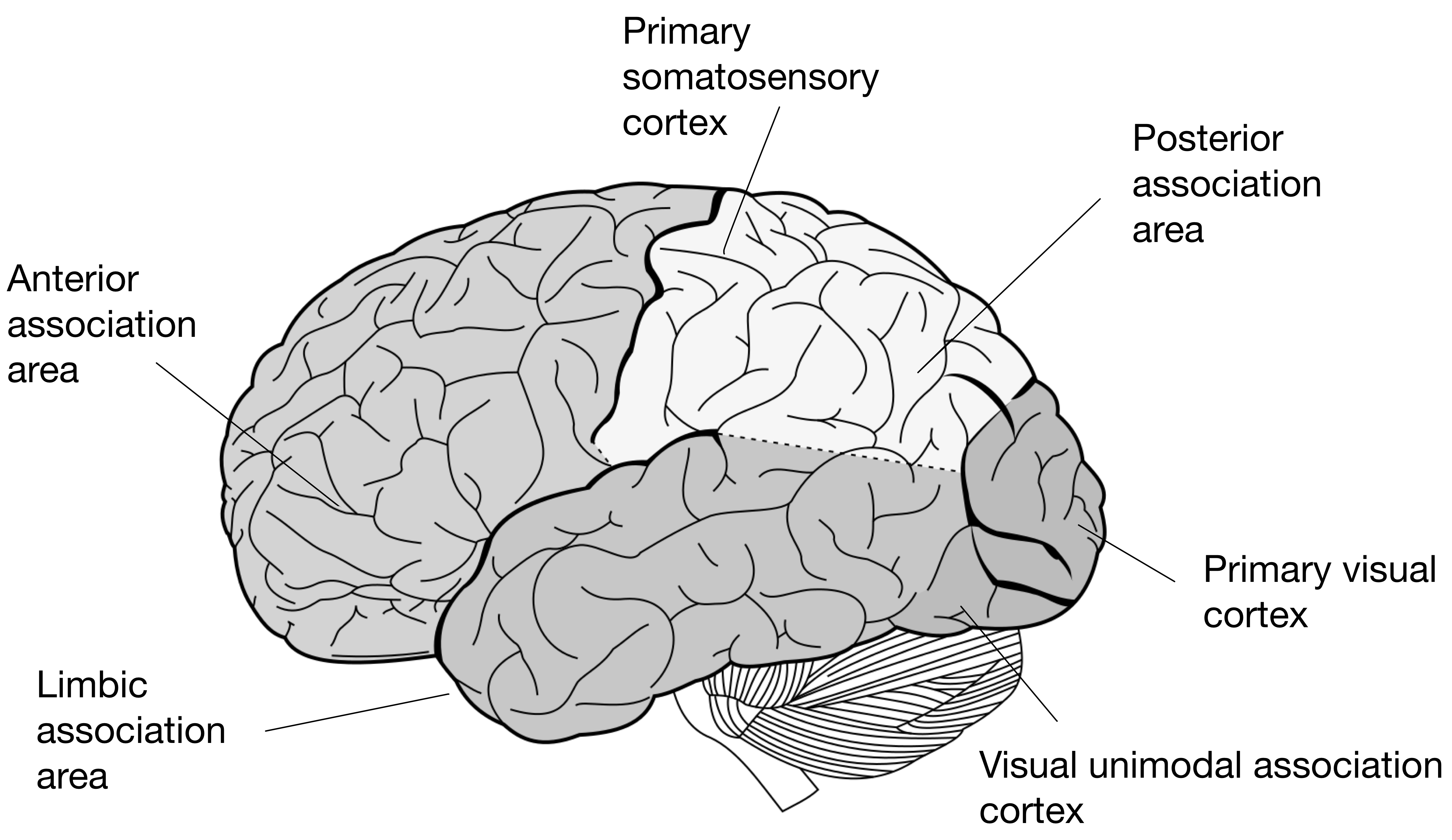}
\caption{Schematic representation of cortex in brain[7]}
\end{figure}

In the architecture of the brain, biological neural networks are not the only structure that is important for processing of intellectual activities. In the cerebral cortex, the sensory area will recognize different inputs based on sensations and then different neural networks located in association cortex will be active to process different tasks (in Figure 1). In fact, the architecture of cortex is necessary for the brain to handle complex cognitive tasks [5, 7]. Since it is necessary for the human brain to let its biological neural network to perform complex cognition task with the architecture of cortex, for the Artificial Neural Network, as a simulated logical architecture of the biological neural network, should also require the architecture of cortex to perform complex cognition tasks [3].


In this paper, we would like to introduce the Cortex Neural Network (CrtxNN). CrtxNN is an architecture that can empower artificial neural network to process complex cognitive tasks. It is a system that learns with neural network groups based on the architecture cerebral cortex. We view this as the missing key for Artificial Neural Networks to solve complex cognitive tasks. In our implementation, the CrtxNN is able to solve the complex cognitive tasks: multi-cognition tasks learning and reflection, in following methods: 

\textbf{1. Multi-cognition tasks learning and processing}. Multi-cognition task learning and processing is important for a powerful learning system. In our architecture, CrtxNN is possible to learn multiple different cognitive tasks with mixed datasets. The CrtxNN will separate task by different sensations, such as image, audio and video, and train a series of neural networks for different tasks. After the learning phase, it is able to recognize different tasks and to process them with a different solution using the corresponding neural network. 

\textbf{2. Reflection}. Reflection is a complex cognitive task and an important learning process [11]. In the CrtxNN, a single neural network will be initially trained for a task, which will be viewed as a general situation processor. Then, a series of neural networks will be trained on exceptional situations which are the parts where the general situation processor has a bad performance. Normally, after training and reflecting, the CrtxNN will be able to perform a strategy decision: to decide which network to be used based on the incoming data. This is able to get a higher performance for a single neural network when a single neural network is approaching its performance limit. 

We provided a series of experiments. The CrtxNN was built on typical neural networks. We compared the CrtxNN with the typical neural network. We tested the result on two tasks: approximation of two-dimensional functions, and mixed image datasets. Our experiment reached a satisfying result. For the approximation of two-dimensional function, we reduced the loss at most by 99\%. For the mixed image datasets, out model can process MNIST and CIFAR10 at the same time and lower the loss by 40\% in average. 

Our contribution of this paper can be summarized as the following points: 

\begin{enumerate}
  \item Using the architecture of Cortex in the brain as a method for artificial neural networks to solve complex cognitive tasks. 
  \item Purpose the learning theory and algorithm of Cortex Neural Network to provide a solution for multi-cognitive tasks learning and reflection.
  \item Design and conduct experiments to test the feasibility of the architecture and theory. The experiment result showed the positive effect of Cortex Neural Network.
\end{enumerate}

\section{Cortex Neural Network Architecture}
\subsection{Model Architecture}

Our design of CrtxNN is learned from the architecture of Cortex in the human brain. The cerebral cortex is very important for brain to perform complex-cognitive process, such as perception, attention, cognition and so on [7]. When a human is performing a cognitive task, the brain is using the corresponding neural network which locates in the association cortex areas to process the tasks [8].The cortex is divided into different functional areas for modular functionality [7]. Basically, the cortex is combined by different neural networks located in association areas that are specified on different tasks. For example, the visual cortex is active when looking at a cat but when the brain tries to understand a stop sign, both the language and visual association area will be active to process the image and understand the sign. 

There is three important structure in the cerebral cortex that we use in CrtxNN. 
\begin{figure}
\centering
\centering
\includegraphics[width=9.3cm]{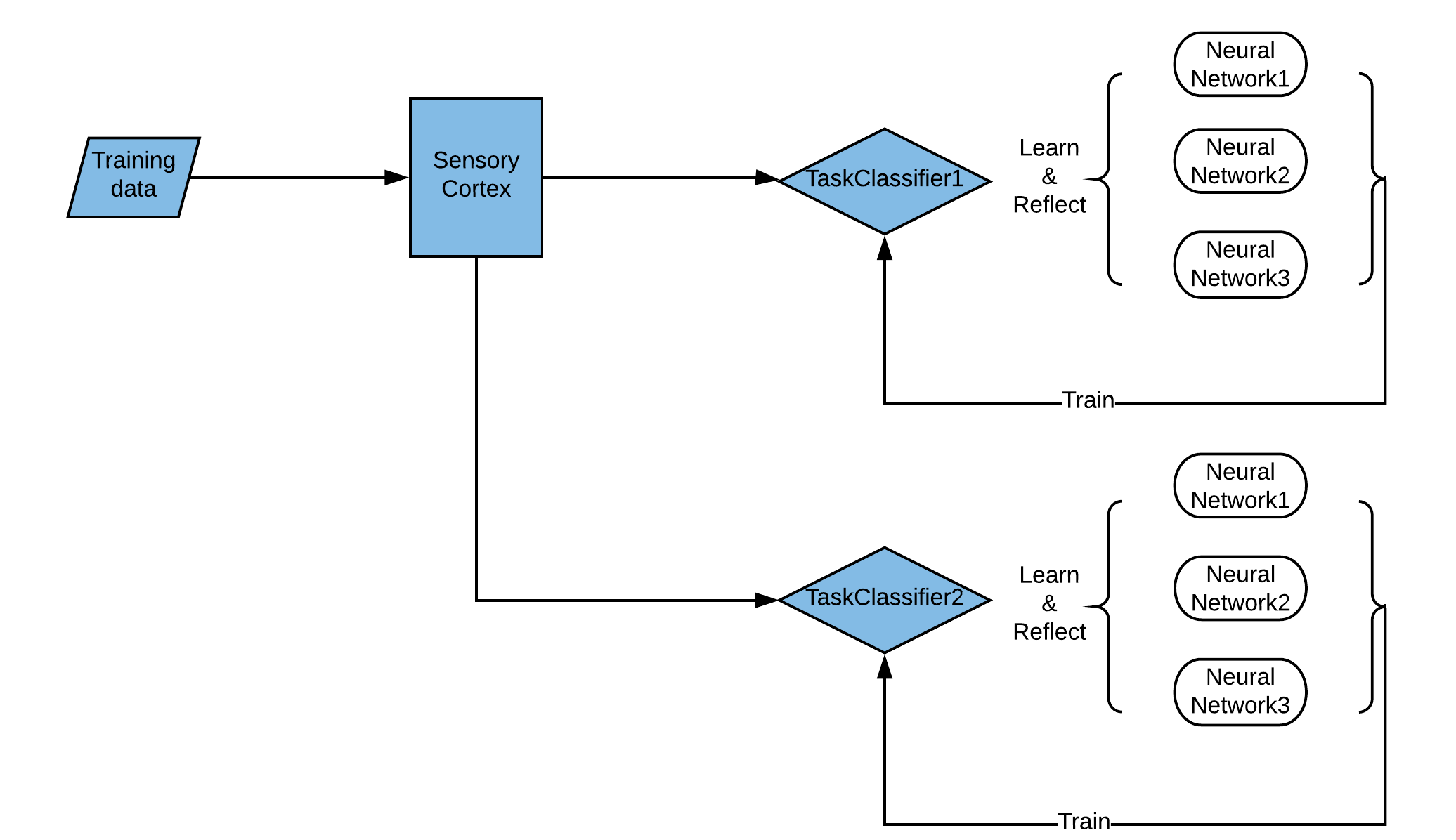}
\caption{CrtxNN's architecture.}
\end{figure}

\begin{itemize}
  \item Sensory area, or the primary somatosensory cortex. For human beings, at the primary somatosensory cortex, the perception areas are orderly arranged from the toe to mouth. [14]
  \item Association area. They enable us to interact differently to the world, and support logical thinking, understanding and language. The association areas are combined by distributed networks in cortex. [8]
  \item Biological neural network. The artificial neural network was designed based on the biological neural network. 
\end{itemize}

In the CrtxNN, we are trying to model the Cortex in the human brain to allow complex cognition tasks to be performed by Artificial Neural Networks. As it is shown in Figure 2, we extracted several important structures for our implementation. Sensory cortex receives all sensory input from the body and sends information to responding network to handle the signal.

\subsection{Model Description}

\textbf{Sensory cortex}
This is the sensory phase of CrtxNN which plays the same role as the Primary somatosensory cortex. The sensory cortex is designed to classify the data by senses. It is a simulation of the way that human brain is used to separate tasks by different sensory organs, such as vision, hearing and smell. In our case, we classify the data by its shape of input and output tensor to represent different senses [future work]. The sensory cortex will initialize a new task classifier during the learning process and will assign data to the corresponding task classifier during the predicting process. 

The input Set $S={x_1, x_2, x_3, ..., x_n}$, the sensory cortex can produce a series of set $X_1, X_2, ..., X_n$ which $\forall m, \forall a,b \in X_m, shape(x_a)=shape(x_b)$, and $X_1\cup X_2 \cup ...\cup X_n = S$. The sets $X_1, X_2, ..., X_n$ will be separately trained by a series of base neural networks.  

\textbf{Association cortex area}
The association cortex area contains a set of base neural networks and task classifiers(see Figure 3). It is the place to process the task based on recognized types. It is used to learn different tasks, reflect to get better performance and, finally, predict the input data using the series of base neural networks. The learning and reflection algorithm will be discussed in the next part.

\textbf{Task classifier}
There will be a new Task Classifier when the Sensory Cortex find a data with different dimension appears. A task classifier represents a meta-task in CrtxNN. After reflection, it is used for the Cortex to decide which is the best network to use when predicting the label of the input data. The core of the task classifier is a decision tree. 

The input Set $X = \{x_{1}, x_{2}, x_{3}, ..., x_{n}\}$, output Set $Y = \{y_{1}, y_{2}, y_{3}, ..., y_{n}\}$. The output set $Y$ is defined by reflection, it is labeled by the id of the corresponding neural network. 

The decision tree is using a number $\theta$ as a threshold for splitting the data into different clusters. The method that we are using for evaluating the performance of a expand is Gran Ratio. The Gran Ratio is defined by the entropy of the split data and the entropy of the clusters. 

The entropy of a discrete set can be descried as: $H(P) = -\sum_{i=1}^{n} P(i)log(P(i))$. The Gran Ratio Method: $Gran\;Ratio = \frac{H(Y)-\sum_{V\in Values} \frac{|Y_v|}{|Y|} H(X_v)}{-\sum_{V\in Values} \frac{|Y_v|}{|Y|}log\frac{|Y_v|}{|Y|}}$

\textbf{Base neural network}
The base neural network is the neural network we choose for the CrtxNN to use in all tasks. For example, a LeNet-5 can be used as a base neural network in the task of handwritten recognition, MNIST. The experiment part provides more examples. 

\subsection{Performance}
\textbf{Reflection}
Let $\delta$ be the possibility of a correct event, $P(C) = \delta, P(W) = (1 - \delta)$. The MSE loss of a reflected neural network group is $L_{MSE} = \frac{1}{n}\sum_{n=1}^{n} (Y_{i} - \overline{Y})^2$, we have: 

$$L_R = \frac{1}{\delta n}\sum_{i=1}^{\delta n} (c_{i} - y_{i})^2 + \frac{1}{(1 - \delta) n}\sum_{i=1}^{(1 - \delta) n} (w_{i} - y_{i})^2$$
\textbf{Multi-cognitive tasks}
Suppose that in the CrtxNN, there exists $k$ reflected neural network group. The MSE loss of the CrtxNN is:

$$L_{CrtxNN} = \frac{\sum_{i=1}^{k} L_{Ri}}{k}$$

\section{Learning \& Predicting Process}

\subsection{Learning Process}
In a learning process, the CrtxNN goes through the following phases: sensing, training and reflecting. At the beginning, the input data will go through the sensing part. The sensory cortex will assign the input data by different dimension into different task classifiers. Then, the data grouped by different task classifiers will start to train by a base neural network, which normally, is the general situation. After that, the CrtxNN will begin to reflect by collecting part of data with high error to train with some new base networks. The task classifiers will be retrained after this process.


\begin{algorithm}
\caption{CrtxNN learning algorithm}
\begin{algorithmic}[1]
\Procedure {TrainCrtxNN}{$X$, $Y$}
\State $S(X, Y) = SensoryCortex(X, Y)$
\ForAll {$x, y \in S(X, Y)$}
\State $NetworkSubSet = \{\}$
\State $network = TrainBaseNetwork(x, y)$
\State $NetworkSubSet.append(network)$
\ForAll {$x, y \in S(X, Y)$}
\State $ErrSet.add(y-NN.predict(x))$
\EndFor
\State $NetworkReflectionSet = Reflect(X, Y, ErrSet)$
\State $NetworkSubSet.append(NetworkReflectionSet)$
\EndFor
\EndProcedure

\end{algorithmic}
\end{algorithm}

\subsection{Reflection}
As a complex cognition task, reflection is important in learning. Humans frequently perform a reflection when they are approaching their extreme in performance when learning. Normally, human will try to analyze the problem, divide them into different cases, and try to apply different methods to solve them separately. There are a number of reflection methods and one of the commonest methods is to reflect from the mistakes. In CrtxNN, we are trying to apply this reflection method. 

We suppose the training input is Set $X = \{x_{1}, x_{2}, x_{3}, ..., x_{n}\}$, and training output is Set $Y = \{y_{1}, y_{2}, y_{3}, ..., y_{n}\}$. The hypothesis Set, which is prediction by a base neural network, is $H = \{h_{1}, h_{2}, h_{3}, ..., h_{n}\}$. 
The prediction is contained by Correct Set $C = \{c_{1}, c_{2}, c_{3}, ..., c_{n}\}$and Wrong Set $W = \{w_{1}, w_{2}, w_{3}, ..., w_{n}\}$. Furthermore, $H = C+W$.

When $|h_{x}-y_{x}| < \epsilon$, an event of correct happens. $Err_{max}$ is the largest error in set W. Otherwise, an event of wrong happens. The reflection method will separate W set into $k$ clusters based on its distance, using the Kmeans clustering algorithm, which aims at minimizing the squared error function: $J = \sum_{i=1}^{k} \sum_{j=1}^{n} (||x_i-v_j||)^2$, where $||x_i-v_j||$ is the Eucledian distance between a point. Then, the data which split by the k different clusters will be trained by different neural networks. 

\subsubsection{Reflection Bound}
\newtheorem{theorem}{Theorem}
\begin{theorem}
Based on the method of reflection above, where $N$ is sufficiently large, $\exists \; \epsilon, k \in [1, +\infty)$, to make $L_{CrtxNN}<L_{NN}$. The largest prediction error is $Err_{max} = \max_{i}\; |w_i-y_i|$. We suppose $Err_{max} = t\epsilon, t>1$, $[\epsilon, Err_{max}]$ describes a reflection bound. 
\end{theorem}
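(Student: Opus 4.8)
The plan is to produce explicit choices of the threshold $\epsilon$ and the cluster count $k$ and verify $L_{CrtxNN}<L_{NN}$ directly, by splitting the single-network loss into its contributions over the correct set $C$ and the wrong set $W$ and showing that reflection leaves the first untouched while strictly shrinking the second.

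First I would dispose of the trivial case: if $L_{NN}=0$ there is nothing to prove, so assume $L_{NN}>0$. Then at least one training example is mispredicted by the base network, so there is a $\epsilon$ with $0<\epsilon<\max_i|h_i-y_i|$ for which the wrong set $W=\{i:|h_i-y_i|\ge\epsilon\}$ is non-empty and carries positive squared error. With this $\epsilon$, set $t=Err_{max}/\epsilon>1$; the interval $[\epsilon,Err_{max}]$ is precisely the band of prediction errors on which the reflection step acts, which is what the statement names the reflection bound.

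Next I would rewrite $L_{NN}$, in the form of $L_R$, as the $\delta$-weighted sum of the mean squared error of the base network on $C$ and on $W$. Under reflection, inputs that fall in $C$ are still routed to the general processor, so that term is unchanged; inputs in $W$ are partitioned by $K$-means into $k$ clusters, each cluster gets its own base network, and the task classifier (the Gran-Ratio decision tree) handles the routing. The quantitative heart of the argument is that these $k$ specialized networks, trained on their smaller and more homogeneous clusters, attain total squared error on $W$ strictly below $\sum_{i\in W}(h_i-y_i)^2$. I would justify this by a capacity argument: for a finite sample and a base network with the universal-approximation property (hence exact interpolation on a finite training set), taking $k$ large enough — in the extreme $k=|W|$, one network per mispredicted point — drives each sub-network's error on its cluster below $\epsilon$, indeed arbitrarily close to zero, with ``$N$ sufficiently large'' guaranteeing that each cluster stays populated enough to be learnable and that the decision tree routes examples to the correct sub-network. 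Adding the unchanged $C$-term to the strictly smaller $W$-term yields $L_{CrtxNN}<L_{NN}$ for this $\epsilon$ and every sufficiently large $k$, which is the asserted existence.

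The main obstacle is upgrading the claim on $W$ from heuristic to rigorous, since it rests on two assumptions: that the base network is expressive enough to fit each cluster to error below $\epsilon$, and that the task classifier routes both training and test inputs to the intended specialized network. I would address the first by invoking universal approximation (or, for the purely empirical loss, interpolation of a finite sample), and the second by assuming — or, in the $N\to\infty$ limit, arguing from consistency of the decision tree on well-separated clusters — that the misrouted fraction is negligible. I would also point out the always-available degenerate choice $k=|W|$, which makes the $W$-contribution vanish outright and gives the cleanest proof that the loss strictly drops.
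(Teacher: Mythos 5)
Your proposal reaches the same conclusion but by a genuinely different route. The paper also splits the loss into the correct-set and wrong-set contributions (its $L_R$ decomposition), but from there it proceeds quantitatively: it asserts a Claim that after reflection each residual on $W$ is bounded by a weighted arithmetic average $C$ of the targets, imposes an Assumption that the errors in $W$ are uniformly distributed over the reflection bound $[\epsilon, Err_{max}]$, and then computes an explicit expected reduction $R=\sum_{i=1}^{N}(1+\frac{t-1}{N}i)^2-N(\frac{t-1}{k})^2$, extracting concrete conditions such as $k=t-1$, $N>2$, or an inequality relating $k$ and $t$ for large $N$, under which $R>0$. You instead argue purely by existence: pick $\epsilon$ so that $W$ is non-empty, leave the $C$-term untouched, and invoke interpolation/universal approximation (in the extreme, $k=|W|$) to drive the $W$-term strictly down. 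Your route is cleaner for the bare existential claim $\exists\,\epsilon,k$ and avoids the paper's uniformity assumption, whereas the paper's route, for all its looseness, is trying to buy something yours does not: a usable relationship between the cluster count $k$ and the error ratio $t$ that motivates the practical choice $k=2$ later in the paper. Be aware that both arguments lean on unproven load-bearing assumptions — the paper's weighted-average Claim and uniform-error Assumption on one side, your interpolation capacity and correct routing by the decision tree on the other — so neither is fully rigorous; but you at least flag your assumptions explicitly and offer the degenerate $k=|W|$ fallback, which is the most defensible version of the existence statement.
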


\begin{proof}As we stated before, according to the definition, the loss of CrtxNN can be described as:
$$L_{RelfNN} = \frac{1}{\delta n}\sum_{i=1}^{\delta n} (c_{i} - y_{i})^2 + \frac{1}{(1 - \delta) n}\sum_{i=1}^{(1 - \delta) n} (w_{i} - y_{i})^2$$

\newtheorem{claim}{Claim}

\begin{claim}
With adequate training process, the loss of a neural network is lower than predicting with the Weighted Arithmetic Average of the data. $\forall w_i \in W, $ $|w_i-y_i| \leqslant C = \frac{\sum_{i=1}^{(1-\delta) n} x_{w_{i}}w_{i}}{\sum_{i=1}^{(1-\delta) n} x_{w_{i}}}$
\end{claim}

Based on the claim, the estimated loss of a reflected neural network can be rewritten into the following form: 
$L_{RelfNN} \leqslant \frac{1}{\delta n}\sum_{i=1}^{\delta n} (c_{i} - y_{i})^2 + \frac{1}{(1 - \delta) n}\sum_{i=1}^{(1 - \delta) n} C^2$.

Let $(1-\delta)n = N$. The expected reduce in loss of CrtxNN, $R= \sum_{i=1}^{N}(w_{i} - y_{i})^2 - NC^2$

\newtheorem{assumption}{Assumption}

\begin{assumption}
We suppose here that set W is an Uniform Distribution in $Err_{max}$ is the largest error in set W. 
\end{assumption}

The best expected reduce in loss of CrtxNN can be rewrite into: 

$$R = \max_{t, k, N}\sum_{i=1}^{N}(1+\frac{t-1}{N}i)^2 - N(\frac{t-1}{k})^2$$

When $k=t-1, N>2$, $R>0$. Furthermore, when $N$ is sufficiently large, when $k > 1$ and $\frac{4}{3k^2}<\frac{(86-t)t-73}{36(t-1)^2}$, $R>0$. In both cases, $L_{CrtxNN}<L_{NN}$. Therefore, the theorem is proved. 
\end{proof}




\subsubsection{Task Classifier Training}
The task classifier plays an important role in reflection. It is served as a strategy decider or a task understanding unit in the CrtxNN. We decide to use a decision tree classifier here is since we have a reflection bound for every group of data. For a continuous function, the decision tree uses a threshold to classify the data. The task classifier receives the grouped training data which is the result of reflection. It will be trained by the input data and corresponding neural network id. According to the reflection bound, the input data groups will be bounded into different ranges. This is the last step of a reflection method. It could be able to handle by a decision tree with a high performance.
The decision of $k$ affects the performance of task classifier. In practice, we usually take $k=2$.

\subsection{Predicting Process}
After the learning and reflecting process, the association area of CrtxNN will evolve into an architecture with several task classifiers to handle different tasks and a series of neural networks to process it. When the system is trying to process an input data, the sensory cortex will first determine which part of association area it belongs to. After this step, the input data will be delivered to the corresponding task classifier. The task classifier will decide the exact neural network for prediction of the data. Finally, the CrtxNN will output the result of that network. This is the way of CrtxNN to simulate the way that cortex will perform when processing a cognitive task in the brain. 




\section{Experiments}
We applied the CrtxNN in two situations with different networks. The first experiment is a focusing on two-dimensional space. The second experiment can be viewed as a proof that the reflection can work in a high dimensional space. These two experiments provide strong evidence of the CrtxNN's ability to perform multi-cognitive tasks and reflection. The result can lead a reduction in loss by 99\% in function approximation and 40\% in mixed datasets on MNIST and CIFAR10. 

\subsection{Approximation of function in two dimensional space}
In this experiment, we are trying to test the ability of CrtxNN based on a task of approximation of function in two-dimensional space. The base neural network we choose here is a feed forward network. We implemented the network and make CrtxNN approximate the assigned functions based on learning data. We tested the result on linear continuous function, higher-order continuous function, linear discontinuous function and higher-order discontinuous function. The base network we choose here is a simple Feed Forward Network with one hidden layer using RELU as activation function. The Adam is used here as the optimizer with the learning rate of $0.0001$. The discontinuous function is trying to simulate the situation that different tasks are involved in the training data.

\begin{figure}[h]
  \centering
  \includegraphics[scale=0.2]{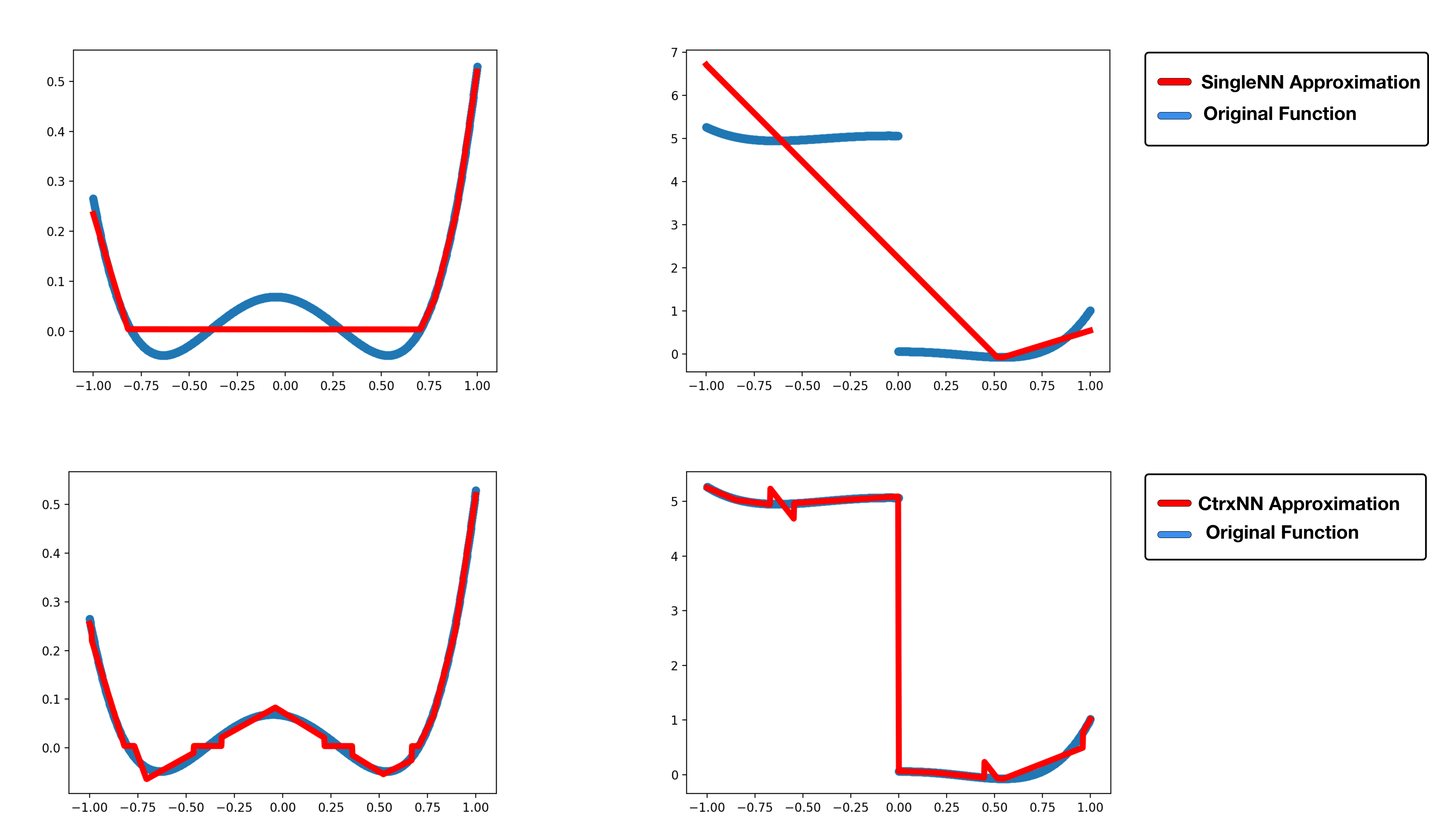}
  \caption{Performance of Single Feed Forward Network on approximation task between before and after applying CrtxNN}
\end{figure}

\begin{tabular}{ |p{5.5cm}||p{2.2cm}|p{2.2cm}|p{2.2cm}|  }
 \hline
 \multicolumn{4}{|c|}{Preformance of CtrxNN and SingleNN} \\
 \hline
 Function&BaseNN Loss&CtrxNN Loss&Loss reduce \%\\
 \hline
 $f(x) = x$   & $3\times10^{-9}$    &$3\times10^{-10}$&$90\%$\\
  \hline
 $f(x)=x^3-0.2x-0.35$   &   $4\times10^{-4}$  & $2\times 10^{-4}$&$92\%$\\
  \hline
 $f(x) = x^4 + 0.2 x^3 - 0.67 x^2$&$1\times10^{-3}$ & $5\times10^{-5}$&$95\%$\\
  \hline
  $f(x) = x^4 + x^3 - 0.6 x^2, x\in [-1,0)$
&  &  &\\
 $f(x) = x^5 +  x^4 - 0.5 x^3,    x\in [0,1)$& $1.4$  & $1.4\times10^{-2}$&$99\%$\\
   \hline

\end{tabular}

Based on the result, we can find out that the CtrxNN is able to have a better performance compared to the base neural network (SingleNN). An interesting observation here is: as the function is more complex, the base network became worse in performance. However, the reduction in loss for CtrxNN is becoming higher. Especially in the case of the discontinuous function, it can be approximated with a much higher performance. This experiment show explains how CtrxNN works (see Figure 3) and provide strong evidence on the situation that the CrtxNN can have a much better performance as the function is becoming more complex. 

\subsection{Mixed MNIST, CIFAR10 datasets}
This experiment could be a real situation that is used for the CrtxNN. We created a MNIST \& CIFAR10 mixed datasets simply putting the two datasets together. We design this experiment for three purposes: 1. to see CrtxNN's ability in multi-cognitive tasks 2. to provide another evidence of the performance of CrtxNN's reflection; 3. based on the Experiment of function in two-dimensional space, to show that the CrtxNN is able to handle a real-world image task. 

In this experiment, we mix the MNIST datasets and CIFAR10 datasets together as a new learning datasets. The base neural network we choose here is a CNN with two convolutional layers, two linear layers, two pooling and RELU as activation function. Adam is used here as the optimizer with the learning rate of 0.00001. Our expected result here is to see the CrtxNN can handle different tasks after learning and have an increase in performance after reflecting. 

After the training process, the CrtxNN produced two taskClassifier and six base neural networks in total. The result is showed in following picture and table. 

\begin{figure}[h]
  \centering
  \includegraphics[scale=0.6]{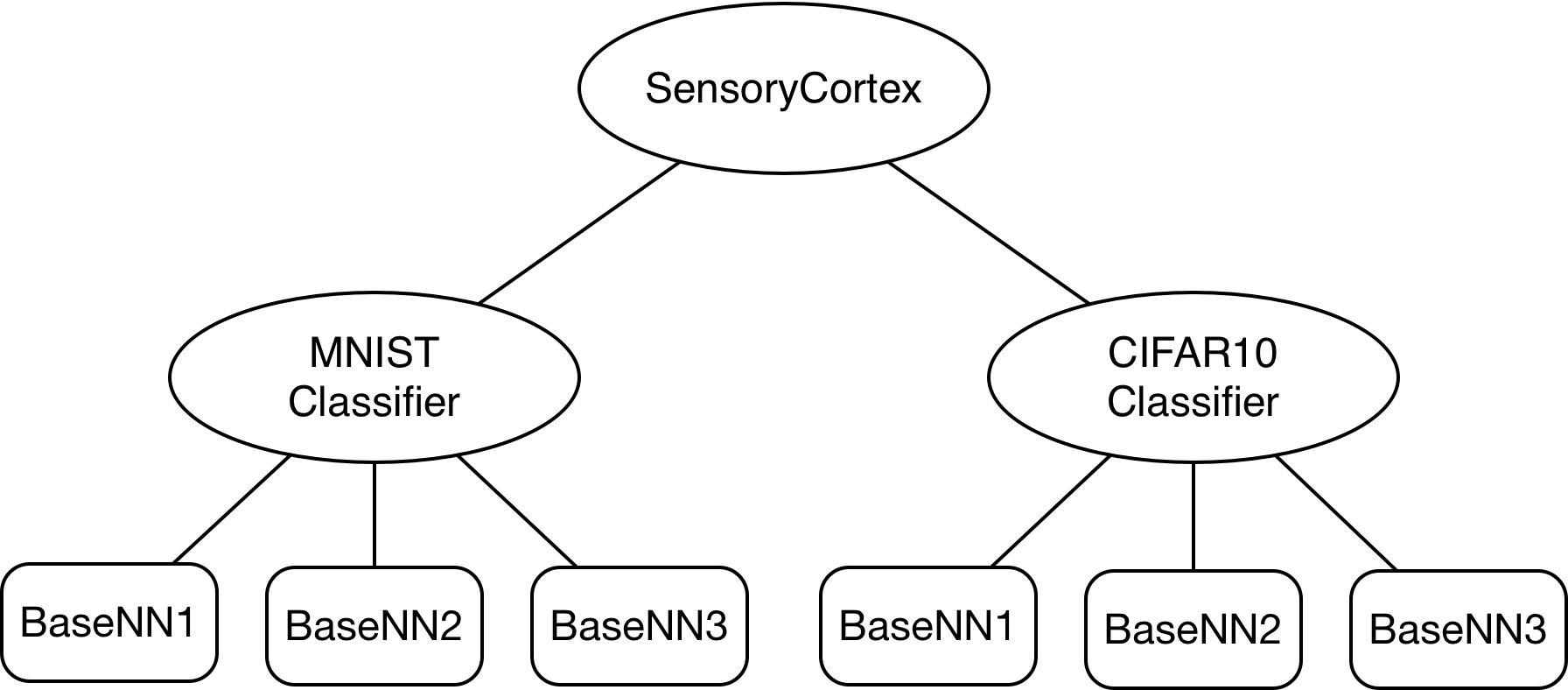}
  \caption{The architecture of CrtxNN after training by MNIST \& CIFAR10 mixed datasets}
\end{figure}



\begin{tabular}{ |p{3.5cm}||p{2cm}|p{2cm}|p{2cm}|p{2cm}|  }
 \hline
 \multicolumn{5}{|c|}{Preformance of CtrxNN and SingleNN} \\
 \hline
 Datasets&MNIST\_NN Accuracy&CIFAR10\_NN Accuracy&CtrxNN \space\space\space\space\space\space Accuracy&Loss reduce \%\\
 \hline
 MNIST   & $98$\%    & X&$98.32\%$&$41.7\%$\\
  \hline
 CIFAR10   &  X & $52\%$&$60\%$&$26.7\%$\\
  \hline

\end{tabular}

Bbased on the result of this experiment, we can make a conclusion that the CrtxNN is able to do multi-cognition tasks and the reflection can have an improvement in performance on image tasks. At this point, we can infer that this architecture can allow more different tasks to be processed and have a better performance. The CrtxNN should be able to solve images, audio, video tasks at the same time. For the multi-cognition tasks, the CrtxNN is able to separate MNIST and CIFAR10 datasets and train corresponding neural network on different tasks. For the reflection logic, according to the result, the CrtxNN generated a series of base neural networks to handle the different situation and the loss has been reduced by over 40\%.



\section{Related works}
\textbf{Information Bottleneck} 

The information bottleneck theory might provide an important clue for the CrtxNN in design. As what has been described in this theory, the neural network is able to perform learning tasks by shrinking the input information and try to make the easiest connection to the output [4]. This might be able to explain why CrtxNN's idea of using a series of neural networks can be better than trying to use a single neural network for solving the complex cognition tasks. The self-information can be really high at the function's part with high derivative. This would require an exponentially larger amount lot of training data and epoch. Therefore, the information that needs the neural network to shrink can be really high. In the case of multi-cognitive tasks, the data might need to be reshaped into the same dimensional shape. This would create many points that might have a high derivative. A CrtxNN might be able to solve it as a linear expansion. 

\textbf{Multi-task reinforcement learning} [6]

We consider the CtrxNN as a similar architecture compared to multi-task reinforcement learning. We believe that this is a biological applaudable method of learning. In this work, the researchers proved its possible to use a smaller number of tasks to reach a high performance. We used the concept and expanded the work in our implementation. 

\section{Discussion}
\textbf{Limitations}

Our implementation of CrtxNN is only one possible method to approach Cortex architecture. Currently, we found its ability to handle multi-cognitive tasks and reflection is obviously superior to the traditional architecture of using a single neural network. In this paper, we only discover two possible complex cognitive tasks that CrtxNN might be able to solve. We strongly believe that it can handle more complex cognitive tasks, such as concept learning, metacognition and critical thinking. 

In our implementation, we simply use the shape of data to be the make the decision in sensory cortex. In fact, there can be better methods to perform a better task. Also, for the part of the base neural network, currently, in our implementation, we only tried to use the different base network to learn the different task. This approach is not elegant. A network that is able to change and fit the input data is expected here. Unfortunately, it is not fully supported by the current research. Scientists are working on it recently [12] but it still takes time. In short, we view this as an easy architecture for the industry to simplify the training process and an adjustable base network is always welcomed to join the architecture. 

\textbf{Theoretical result} 

We may believe that with the power of the architecture of Cortex, the neural network can reduce the loss in any network to be 0. Because the reflection method might always be able to reduce the loss by certain percentage, if we can make the CrtxNN even deeper, there could result in a further reduction in loss. This will be carefully tested as a future work. 

Also, we can possibly infer from the current result that the CrtxNN will reach the best performance currently in all deep learning tasks today. The reflection method is always able to reduce the loss no matter what base network we have. Therefore, if the base neural network of CrtxNN is the network that currently has the best performance, it will be better for loss and accuracy. 

\textbf{Future work} 

We will focus on using the CrtxNN as a way to approach the performance which its limit is 1. More detailed mathematical and computational proofs will be studied. Currently, we believe that the characteristic of the neural network on discontinuous function and the information bottleneck theory may provide clues. Also, we only focused on dividing the problems with the new neural network in this paper. However, in a real learning process, the pruning is also considered. To merge the similar tasks is important. PackNet [9] is a possible method to merge. 

\section{Conclusion}

In this paper, we performed the CrtxNN as a bionic approach motivated by cortex in the human brain to solve complex cognition tasks based on Artificial Neural Network. Our method, CrtxNN, enables the Artificial Neural Networks to perform multi-cognition tasks promisingly and to reflect on errors to continuously get a better performance. Based on the result of our experiment, the CrtxNN could perform multi-cognition tasks and can reduce the loss by 40\% in MNIST and CIFAR10.

\section*{References}

\small

[1] LeCun, Y., Bengio, Y., \& Hinton, G. (2015). Deep learning. \textit{nature}, 521(7553), 436.

[2] Anderson, J. R. (1996). ACT: A simple theory of complex cognition. \textit{American Psychologist}, 51(4), 355.

[3] Margulies, D. S., \& Smallwood, J. (2017). Converging evidence for the role of transmodal cortex in cognition. \textit{Proceedings of the National Academy of Sciences}, 201717374.

[4] Shwartz-Ziv, R., \& Tishby, N. (2017). Opening the black box of deep neural networks via information. \textit{arXiv preprint arXiv:1703.00810}.

[5] Koechlin, E., Basso, G., Pietrini, P., Panzer, S., \& Grafman, J. (1999). The role of the anterior prefrontal cortex in human cognition. \textit{Nature}, 399(6732), 148.

[6] Wilson, A., Fern, A., Ray, S., \& Tadepalli, P. (2007, June). Multi-task reinforcement learning: a hierarchical Bayesian approach. \textit{In Proceedings of the 24th international conference on Machine learning} (pp. 1015-1022). ACM.

[7] Siegelbaum, S. A., \& Hudspeth, A. J. (2000). \textit{Principles of neural science} (Vol. 4, pp. 1227-1246). E. R. Kandel, J. H. Schwartz, \& T. M. Jessell (Eds.). New York: McGraw-hill.

[8] Thomas Yeo, B. T., Krienen, F. M., Sepulcre, J., Sabuncu, M. R., Lashkari, D., Hollinshead, M., ... \& Fischl, B. (2011). The organization of the human cerebral cortex estimated by intrinsic functional connectivity. \textit{Journal of neurophysiology}, 106(3), 1125-1165.

[9] Mallya, A., \& Lazebnik, S. (2017). PackNet: Adding Multiple Tasks to a Single Network by Iterative Pruning. \textit{arXiv preprint arXiv:1711.05769}.

[10] Knauff, M., \& Wolf, A. G. (2010). Complex cognition: the science of human reasoning, problem-solving, and decision-making.

[11] Boud, D., Keogh, R., \& Walker, D. (Eds.). (2013). \textit{Reflection: Turning experience into learning}. Routledge.

[12] Kaiser, L., Gomez, A. N., Shazeer, N., Vaswani, A., Parmar, N., Jones, L., \& Uszkoreit, J. (2017). One model to learn them all. \textit{arXiv preprint arXiv:1706.05137}.

[13] Caruana, R. (1998). Multitask learning. In \textit{Learning to learn} (pp. 95-133). Springer, Boston, MA.

[14] Cauller, L. (1995). Layer I of primary sensory neocortex: where top-down converges upon bottom-up. \textit{Behavioural brain research}, 71(1-2), 163-170.
















\end{document}